\definecolor{mygreen}{RGB}{28,172,0} 
\definecolor{mylilas}{RGB}{170,55,241}
\theoremstyle{plain}
\newtheorem{theorem}{Theorem}[section] 
\theoremstyle{definition}
\newtheorem{assumption}[theorem]{Assumption}
\newtheorem{lemma}[theorem]{Lemma}
\title{\LARGE \bf
Distributed off-Policy Actor-Critic Reinforcement Learning \\
with Policy Consensus
}
\author{Yan Zhang and Michael M. Zavlanos
\thanks{Yan Zhang and Michael M. Zavlanos are with the Department of Mechanical Engineering and Materials Science, Duke University, Durham, NC 27708, USA. {\tt\small \{yan.zhang2,michael.zavlanos\}@duke.edu}}%
}
\newcommand\fs@spaceruled{\def\@fs@cfont{\bfseries}\let\@fs@capt\floatc@ruled
	\def\@fs@pre{\vspace{1.5\baselineskip}\hrule height.8pt depth0pt \kern2pt}%
	\def\@fs@post{\kern2pt\hrule\relax}%
	\def\@fs@mid{\kern2pt\hrule\kern2pt}%
	\let\@fs@iftopcapt\iftrue}
\begin{document}

\maketitle
\thispagestyle{empty}
\pagestyle{empty}

\begin{abstract}
In this paper, we propose a distributed off-policy actor critic method to solve multi-agent reinforcement learning problems. Specifically, we assume that all agents keep local estimates  of the global optimal policy parameter and update their local value function estimates independently. Then, we introduce an additional consensus step to let all the agents asymptotically achieve agreement on the global optimal policy function. The convergence analysis of the proposed algorithm is provided and the effectiveness of the proposed algorithm is validated using a distributed resource allocation example. Compared to relevant distributed actor critic methods, here the agents do not share information about their local tasks, but instead they coordinate to estimate the global policy function.
\end{abstract}

\section{INTRODUCTION}
\label{sec:intro}
Reinforcement learning (RL) algorithms have been widely used to solve decision making and control problems in unknown and stochastic environments, \cite{mnih2013playing,mnih2015human}. Existing RL algorithms fall in two main categories, tabular-based methods and methods that use function approximation. Tabular-based methods are generally easier to analyze \cite{bertsekas95dynamic}, however, they require the state and action spaces to be discrete and finite. On the other hand, using function approximation, such as Neural Networks \cite{mnih2015human}, allows to solve RL problems in continuous state and action spaces. The goal of these methods is to estimate the value function or policy function over the whole state-action space with a finite number of function parameters. Then, the learning problem can be reduced to finding the optimal function parameters in finite dimensions. However, methods that rely on function approximation can be sensitive to approximation errors and can diverge in some cases \cite{baird1995residual}. Understanding convergence of RL algorithms with function approximation is an active area of research that is also the focus of this work.

Existing RL algorithms can also be classified as value-based methods \cite{sutton2008convergent,maei2010toward} or policy gradient methods \cite{sutton2000policy,silver2014deterministic}. Value-based methods parameterize the state-value function $V(s)$ or state-action value function $Q(s,a)$ and learn these function parameters during the learning process. In these methods, in order to obtain the control signal, a maximization problem $\max_a Q(s,a)$ needs to be solved at each time step of the execution phase, which is impractical. Instead, policy gradient methods parameterize and directly learn the policy function using stochastic gradient descent, \cite{sutton2000policy,silver2014deterministic}. It is well-known that in these methods the estimate of the policy gradient typically has large variance and one popular way to reduce this variance is the Actor-Critic (AC) method, \cite{konda2000actor}. Essentially, the learning agent keeps a policy function estimator called Actor and a value function estimator called Critic. The Critic estimates the value function under the current policy and the Actor uses the feedback from the Critic to improve the policy function parameters. 

In this paper, we are interested in distributed Actor-Critic methods. Specifically, we consider networks of agents that have their own tasks, states and actions, and assume that their states depend not only on their own actions but also on the states and actions of the other agents in the network. The goal is to let the agents in the network collaborate to learn a global optimal policy that maximizes the aggregate accumulated rewards over the network. The challenge in applying the AC method in this scenario is that the Critic update needs the local reward information from all the agents. This usually requires a master node to serve as a central critic, \cite{foerster2017counterfactual,srinivasan2018actor}. When the network size is large, having a master node introduces significant communication overhead and may also cause privacy issues. The works in \cite{macua2015distributed, stankovic2016multi,yuan2017variance,wai2018multi,lee2018primal} employ distributed optimization methods to evaluate a given fixed policy in multi-agents systems. However, these methods do not improve the policy parameters. A different formulation is proposed in \cite{pennesi2010distributed} that develops a distributed Actor Critic method for teams of homogeneous agents that learn the same task independently and share their policy parameters with their neighbors. Instead, here we assume that the agents can have different tasks, different state and action spaces, and their behavior can affect each other. The works in \cite{lowe2017multi} study the multi-agent actor critic method in mixed competitive-cooperative environment but with no convergence analysis.

Perhaps the most relevant work to the method proposed here is \cite{zhang2018fullyICML,zhang2018networked}. The key idea in these works is to let each agent keep a local estimate of the global value function and introduce an additional consensus step on these local estimates to make the local agents asymptotically aware of the global value function. Compared to \cite{zhang2018fullyICML,zhang2018networked}, here every agent keeps its own local value function estimate associated with their own task and never shares this with its neighbors. Therefore, information about the local tasks is not revealed to other agents, as in \cite{zhang2018fullyICML,zhang2018networked}. Instead, the agents keep local estimates of the global policy function and a consensus step is introduced so that all agents agree on the optimal policy. 

The rest of the paper is organized as follows. In Section~\ref{sec:prelim}, we introduce the distributed reinforcment learning problem under consideration as well as some preliminary results. In Section~\ref{sec:pf}, we formulate the decentralized off-policy reinforcement learning problem and formally present our proposed distributed Actor-Critic algorithm. In Section~\ref{sec:convergence}, we analyze the convergence of the proposed algorithm. In Section~\ref{sec:simulation}, we present a numerical example to validate our analysis. In Section~\ref{sec:conclusion}, we conclude the paper.

\section{Preliminaries}
\label{sec:prelim}
\subsection{The Reinforcement Learning Problem}
Consider network of $N$ agents. We define the state of the system $s = [s_1, s_2, \dots, s_N] \in \mathcal{S}$, where $s_i \in \mathcal{S}_i$ denotes the state of agent $i$. Moreover, we define the action of the system $a = [a_1, a_2, \dots, a_N] \in \mathcal{A}$, where $a_i \in \mathcal{A}_i$ denotes the action of agent $i$. The state space $\mathcal{S}_i$ and action space $\mathcal{A}_i$ are continuous. We denote the state transition function by $s(t+1) = T_t(s(t), a(t), \omega(t))$, where $\omega(t)$ represents the noise in the state dynamics at time $t$. We also assume that the transition process is stationary, that is, the function $T_t$ and the noise $\omega(t)$ generate a time-invariant distribution $P(s(t+1)|s(t), a(t))$. We assume that the global state and action can be observed by all the agents. This is a common assumption in current reinforcement learning problems, \cite{lowe2017multi,foerster2017counterfactual,zhang18continuous,zhang2018fullyICML}.

Let $r_i(s(t), a(t))$ denote the local reward received by agent $i$ at time $t$ as a result of taking action $a(t)$ at state $s(t)$. Define also the the global deterministic policy function, $\pi(s): \mathcal{S} \rightarrow \mathcal{A}$. Moreover, denote by $V^\pi(s) = \mathbb{E}_{\rho^\pi}[\sum_{t=0}^\infty \gamma^t \sum_{i=1}^N r_i(s(t), a(t)) | s(0) = s, \pi]$ the state value function and by $Q^\pi(s,a) = \mathbb{E}_{\rho^\pi} [\sum_{t=0}^\infty \gamma^t \sum_{i=1}^N r_i(s(t), a(t)) | s(0) = s, a(0) = a, \pi]$ the state-action value function. The goal is to maximize the infinite-time discounted-reward value function
\begin{equation}
	\label{eq:pf}
	J(\pi) = \mathbb{E}_{\rho^\pi}[ V^\pi(s(0))],
\end{equation}
over all policies $\pi\in \Pi$ where $\Pi$ is the policy function space, $\gamma \in (0,1)$ is the discounted factor. Assuming the transition probability $P(s(t+1)|s(t), a(t))$ and the reward $r_i(s(t), a(t))$ are known, the existence of the optimal stationary policy function $\pi^\ast(s)$ that maximized the value function \eqref{eq:pf} is shown in \cite{bertsekas95dynamic} and this policy can be found using planning methods, e.g., policy iteration. However, if the probability and reward functions are unknown, reinforcement learning methods need to be applied to find the optimal policy function $\pi^\ast(s)$. In this paper, we are interested in actor critic methods to find the optimal policy $\pi^*(s)$.

\subsection{Actor Critic Method}
\label{subsec:ac}
Parameterizing the policy function as $\pi_\theta(s) = \phi_\pi(s)^T \theta$, where $\theta \in \mathbb{R}^{n_\theta}$ is the policy parameter and $\phi_\pi(s): \mathcal{S} \rightarrow \mathbb{R}^{n_\theta}$ is a vector of $n_\theta$ policy feature functions, the problem of finding the optimal policy that maximizes the value function in \eqref{eq:pf} can be reduced to the following optimization problem in the parameterized function space $\Pi_\theta$,
\begin{equation}
\label{eq:pf_theta}
\max_{\theta} J(\theta).
\end{equation}
Problem~\eqref{eq:pf_theta} can be solved using stochastic gradient descent methods. Specifically, the gradient $\nabla_\theta J$ is given in \cite{silver2014deterministic},
\begin{equation}
\label{eq:deter_policy_grad}
\nabla_\theta J(\pi) = \mathbb{E}_{\rho^\pi}[\nabla_\theta \pi(s) \nabla_a Q^\pi(s, a)|_{a = \pi_\theta(s)}].
\end{equation}
Since function $Q^\pi(s,a)$ under policy $\pi_\theta$ is unknown, policy evaluation algorithms are needed to approximate $Q^\pi(s,a)$ and furthermore the gradient $\nabla_\theta J(\pi)$. To do so, the function $Q^\pi(s,a)$ can be parameterized as $Q^\pi(s,a) = \phi_{Q}(s,a)^T w$, $w \in \mathbb{R}^{n_{w}}$, where $\phi_Q(s,a)$ is a vector of $n_w$  feature functions. Then, to solve problem~\eqref{eq:pf_theta} we can use the following actor-critic algorithm consisting of the two time scale updates
\begin{equation}
	\label{eq:central_ac}
	\begin{split}
	& w(t+1) = w(t+1) + \alpha_w(t) (h(w(t), \theta(t)) + M(t+1)), \\
	& \theta(t+1) = \theta(t) + \alpha_\theta(t) (f(w(t), \theta(t)) + N(t+1)),
	\end{split}
\end{equation}
where $h(w(t), \theta(t))$ represents the update formulas of a policy evaluation algorithm, e.g.,\cite{sutton2008convergent,maei2010toward}, $f(w(t), \theta(t))$ represents the policy gradient update in \eqref{eq:deter_policy_grad}, and $M(t+1)$ and $N(t+1)$ are noises coming from sampling during the learning process. The first update in \eqref{eq:central_ac} estimates the value function given the current policy, and is named Critic update. The second update in \eqref{eq:central_ac} improves the current policy, and is named Actor update. Convergence of this Actor Critic scheme typically depends on analyzing the two time scale updates \cite{borkar2009stochastic}, which we explain in detail in Section~\ref{sec:convergence}.

\section{Problem formulation}
\label{sec:pf}

Since the estimation of the global state-action value function $Q^\pi(s,a)$ in \eqref{eq:central_ac} requires the global reward function $r(t) = \sum_{i=1}^N r_i(s(t), a(t))$, the actor-critic method in~\eqref{eq:central_ac} is centralized. That is, a master node is required to collect the local rewards from all the agents and execute the update~\eqref{eq:central_ac}. To design a decentralized algorithm, we first decompose the value and action value functions using linearity of the expectation as
\begin{equation*}
\label{eq:Ji}
V^\pi(s) = \sum_{i = 1}^N V_i^\pi(s) \text{ and } Q^\pi(s,a) = \sum_{i=1}^{N} Q_i^\pi(s,a),
\end{equation*}
where $V_i^\pi(s)$ is the local value function and $Q_i^\pi(s,a)$ is the local action value function under policy $\pi$. Then, problem~\eqref{eq:pf} can be written in the following separable form
\begin{equation}
\label{eq:dec_rl}
\min_{\theta} \sum_{i = 1}^N J_i(\theta),
\end{equation}
where $J_i(\theta) = \mathbb{E}[ V_i^\pi(s(0)) | \rho(s(0)), \pi_\theta]$.
A common approach to solve problem~\eqref{eq:dec_rl} in a distributed way is by introducing local estimates $\theta_i \in \mathbb{R}^{n_\theta}$ and using consensus to estimate the global optimal policy parameter $\theta^\ast$:
\begin{equation}
\label{eq:consensus_optimization}
\theta_{i}(t+1) = \sum_{j \in \mathcal{N}_i} W_{ij} (\theta_j(t) + \alpha_\theta(t) \nabla_\theta J_j(\theta)|_{\theta = \theta_j(t)}).
\end{equation}
In the RL problem under consideration, the objective function $J_i(\theta)$ is usually nonlinear and the gradient $\nabla_\theta J_i$ is usually evaluated in a stochastic way. The convergence of \eqref{eq:consensus_optimization} in this case is studied in \cite{bianchi2013convergence}. The key idea in \cite{bianchi2013convergence} is to evaluate the local gradient $\nabla_\theta J_j(\theta)|_{\theta = \theta_j(t)}$ in an on-policy fashion, as in \cite{silver2014deterministic}, for which all agents need to behave under policy $\pi_{\theta_j(t)}$. This suggests that to execute the consensus update \eqref{eq:consensus_optimization} at time $t$, every agent $i$ needs to send its policy parameter $\theta_i(t)$ to all other agents that need to execute the policy $\pi_{\theta_i(t)}$ for multiple time steps so that agent $i$ can collect local rewards to estimate $\nabla_\theta J_i(\theta)|_{\theta = \theta_i(t)}$. This on-policy scheme is impractical even though its convergence analysis is simpler, as seen in \cite{bianchi2013convergence}. 

This motivates us to consider off-policy actor-critic methods as in \cite{silver2014deterministic,degris2012off}. The idea is to let all agents behave under a fixed policy, named behavorial policy $\beta(s)$, and optimize an approximate objective function,
\begin{equation}
\label{eq:approx_objective}
J_\beta(\theta) = \mathbb{E}_{\rho^\beta}[ V^\pi(s)],
\end{equation}
instead of the true cost $J(\theta)$ in \eqref{eq:pf}, where the expection in \eqref{eq:approx_objective} is taken over the stationary state distribution $\rho^\beta$ instead of $\rho^\pi$ as in \eqref{eq:pf}. Same as with $J(\theta)$, the approximate cost $J_{\beta}(\theta)$ can also be decomposed into local costs as
\begin{equation}
\label{eq:approx_dec_rl}
\min_{\theta} \sum_{i = 1}^N J_{i,\beta}(\theta).
\end{equation}
To achieve consensus on the local policy parameters $\theta_i$, we can apply a similar update as in \eqref{eq:consensus_optimization},
\begin{equation}
\label{eq:approx_consensus_opt}
\theta_{i}(t+1) = \sum_{j \in \mathcal{N}_i} W_{ij} (\theta_j(t) + \alpha_\theta(t) \nabla_\theta J_{j,\beta}(\theta)|_{\theta = \theta_j(t)}).	
\end{equation}
However, as discussed in \cite{silver2014deterministic}, the gradient $\nabla J_{j,\beta}(\theta)|_{\theta = \theta_j(t)}$ cannot be exactly estimated in this off-policy setting, therefore, it is replaced with an approximate gradient
\begin{equation}
\label{eq:approx_policy_gradient}
\hat{\nabla}J_{j,\beta}(\theta) = \mathbb{E}_{\rho^\beta}[\nabla_\theta \pi(s) \nabla_a Q_j^\pi(s, a)|_{a = \pi_\theta(s)}].
\end{equation}

Convergence of the Actor Critic method in \eqref{eq:approx_consensus_opt} using the off-policy gradient in \eqref{eq:approx_policy_gradient} is studied in \cite{degris2012off} for the centralized problem when no information is received from the neighbors. However, convergence of \eqref{eq:approx_consensus_opt} with off-policy gradient in \ref{eq:approx_policy_gradient} for decentralized problems is unknown, which is the focus of this paper.

In practice, we compute the gradient $\nabla_a Q_j^\pi(s,a)$ in \eqref{eq:approx_policy_gradient} using the parameterization $Q_i^\pi(s,a) \approx \sum_{p=1}^{n_w} \phi_{w_i}(s,a)^T w_i$. To ensure that this parameterization preserves the update \eqref{eq:approx_policy_gradient} that uses the true state-action value function $Q_i^\pi(s,a)$, as discussed in \cite{silver2014deterministic}, we make the following assumption:

\begin{assumption}
	\label{assum:function_compatibility}
	(Function Compatibility) All the local value functions $Q_i^\pi(s,a)$ are parameterized as $Q_i^\pi(s,a) = (a - \pi(s))^T(\nabla_\theta \pi(s))^T w_i$. That is, the feature function for $Q_i^\pi(s,a)$ satisfies that $\phi_{w_i}(s,a) = \nabla_\theta \pi(s) (a - \pi(s))$.
\end{assumption}
Given the Assumption~\ref{assum:function_compatibility} and the expression for the off-policy gradient~\eqref{eq:approx_policy_gradient}, the consensus update~\eqref{eq:approx_consensus_opt} of the local policy parameters $\theta_i$ becomes
\begin{equation}
\label{eq:update_theta}
\theta_i(t) = \sum_{j \in \mathcal{N}_i} W_{ij} (\theta_i(t-1) + \alpha_\theta \nabla \pi(s(t)) \nabla_\theta \pi(s(t))^T w_i(t).
\end{equation}

To conduct the policy parameter update in \eqref{eq:update_theta}, we have to compute $w_i(t)$, which is the value function parameter. To compute this parameter, generally speaking, any off-policy policy evaluation algorithm can be used at the local agents independently. Then, combining these policy evaluation algorithms with the approximate local gradient update in \eqref{eq:approx_policy_gradient}, a decentralized Actor Critic algorithm can be developed. In this paper, we employ the gradient temporal difference (GTD)  learning algorithm studied in \cite{sutton2008convergent,maei2010toward,silver2014deterministic} to estimate the local value function parameters $w_i$ in \eqref{eq:update_theta}, because this method is known to be stable in the off-policy setting.
The proposed distributed Actor Critic method is presented in Algorithm~\ref{alg:dac}.

\begin{algorithm}[t]
	\caption{Distributed Actor Critic with policy consensus}
	\label{alg:dac}
	\begin{algorithmic}[1]
		\Require{Initial value function parameters $\{w_i(0)\}$ and policy parameters $\{\theta_i(0)\}$. Step sizes $\alpha_w(0)$ and $\alpha_\theta(0)$. Set $t = 0$. Maximum time limit $T$. Discount factor $\gamma$. Fixed behavorial policy function $\beta(s)$. Agents' inital state $s(0)$.}
		\For{$0 \leq t \leq T$}
		\State{All agents take actions according to $\beta(s(t))$ and observe the actions $a(t)$, next states $s(t+1)$ and the local rewards $r_i(t)$.
			}
		\State{Every agent $i$ updates its local value function estimate $w_i$ based on the observed transition $[s(t), a(t), s(t+1)]$ and local reward $r_i(t)$ using an off-policy policy evaluation algorithm.
			}
		\State{Every agent $i$ updates its local policy parameter $\theta_i(t)$ according to equation \eqref{eq:update_theta}.
		}
	\EndFor
\end{algorithmic}
\end{algorithm}


\section{Convergence Analysis}
\label{sec:convergence}

In this section, we analyze the convergence of Algorithm~\ref{alg:dac} using the two-time scale technique in \cite{borkar2009stochastic}. The key idea is that the Critic updates at a faster rate than the Actor so that to analyze the convergence of the Critic update in line 3 in Algorithm~\ref{alg:dac}, we can assume that each local policy parameter $\theta_i$ is fixed. Then, each local Critic can independently estimate its own local value function and the convergence analysis of this local policy evaluation is the same as GTD in \cite{maei2010toward}. To analyze the convergence of the Actor, we can assume that every local Critic has already converged to the correct value function estimate. Compared to the analysis for the centralized off-policy Actor Critic method in \cite{degris2012off}, the challenge here is to analyze the decentralized off-policy Actor-Critic algorithm with a consensus step in line 4 of Algorithm~\ref{alg:dac}. To do so, we first introduce several assumptions that are common in the reinforcement learning literature.

\begin{assumption}
	\label{assum:aperiordic}
	We assume that the behavioral policy $\beta(s)$ is stationary, and the Markov chain that governs the state $s(t)$ under policy $\beta(s)$ is irreducible and aperiodic.
\end{assumption}
The above assumption ensures that when the agents behave under policy $\beta(s)$, the system states will reach the stationary state distribution $\rho^\beta$.

\begin{assumption}
	\label{assum:bounded_reward}
	We assume that for all $i$, the reward $|r_i(s, a)|$ is uniformly bounded for all $s$ and $a$.
\end{assumption}
This assumption ensures that the objective function in \eqref{eq:approx_dec_rl} is upper bounded when the discount factor satisfies $0 < \gamma < 1$, so that the problem~\eqref{eq:approx_dec_rl} is well defined.

\begin{assumption}
	\label{assum:stepsize}
	We assume that the stepsizes $\alpha_w(t)$ and $\alpha_\theta(t)$ are deterministic and satisfy that $\sum_{t=0}^{\infty} \alpha_w(t) \rightarrow \infty$, $\sum_{t=0}^{\infty} \alpha_\theta(t) \rightarrow \infty$, $\sum_{t=0}^{\infty} \alpha_w(t)^2 < \infty$ and $\sum_{t=0}^{\infty} \alpha_\theta(t)^2 < \infty$. Moreover, $\frac{\alpha_\theta(t)}{\alpha_w(t)} \rightarrow 0$.
\end{assumption}
This assumption is standard in the literature employing two-time scale analysis, \cite{borkar2009stochastic,zhang2018fullyICML,zhang2018networked}. Furthermore, let $W_t \in \mathbb{R}^{N \times N}$ be a random weight matrix of the communication graph at time $t$. Define the filtration $\mathcal{F}_t$ to be a $\sigma-$algebra $\sigma(\{\theta_i(0)\}, s(\tau), \{r_i(\tau)\}, W(\tau), \tau \leq t)$. Then, we have the following assumptions on $W_t$.
\begin{assumption}
	\label{assum:weight_matrix}
	We assume that $W_t$ satisfies the following conditions:
	(a) $W_t$ is row stochastic and $\mathbb{E}[W_t]$ is column stochastic for all $t>0$. That is, $W_t \mathbf{1} = \mathbf{1}$ and $\mathbf{1}^T\mathbb{E}[W_t] = \mathbf{1}^T$; (b) The spectral norm satisfies $\mathbb{E}[W_t^T(I - \frac{1}{N}\mathbf{1}\mathbf{1}^T)W_t] = \rho_W < 1$; (c) $W_t$ and $(s_t, r_t)$ are conditionally independent given the filtration $\mathcal{F}_{t-1}$.
\end{assumption}
The above assumptions are standard in the stochastic consensus optimization literature \cite{bianchi2013convergence}. They will be used to establish consensus on the policy parameters.

\begin{assumption}
	\label{assum:stability_phi}
	We assume the vector of feature functions $\phi_\pi(s)$ is uniformly bounded for all $s$.
\end{assumption}
This assumption is common and essential to show stability of reinforcment learning algorithms, see \cite{sutton2008convergent,maei2010toward,zhang2018networked,zhang2018fullyICML}.

\begin{assumption}
	\label{assum:stability_theta}
	We assume that through the whole history of the algorithm, $\{\theta_i(t)\}$ belongs to a compact set for all $i$ and $t$. We also assume that this compact set contains at least one local maximum of the problem \eqref{eq:approx_dec_rl}.
\end{assumption}
This assumption is necessary to show stability of the policy parameter updates. Moreover, boundedness of the policy parameters is commonly observed in practice when implementing RL algorithms as mentioned in \cite{degris2012off}. It is possible to remove this assumption by projecting the policy parameters $\theta_i(t)$ onto an appropriately chosen compact set after each update \eqref{eq:update_theta}. However, this approach raises the question of how to select this projection set so that it contains at least one local maximum of the problem~\ref{eq:approx_dec_rl}, as per Assumption~\ref{assum:stability_theta}, and also complicates the analysis of off-policy methods, as we discuss after we present Theorem~\ref{thm:convergence}.

Let $\chi_i(\theta)$ denote a function that maps the policy parameter $\theta$ to the optimal value function parameter by means of the policy evaluation algorithm \cite{maei2010toward}. Then, we can show the following result for the  function $\chi_i(\theta)$.
\begin{lemma}
	\label{lem:lipschitz_chi}
	Let policy parameter $\theta_i$ at agent $i$ be fixed, and let agent $i$ run the gradient TD (GTD) learning algorithm in \cite{maei2010toward} to evaluate this policy. Then, the local value function parameter $w_i$ converges to $\chi_i(\theta)$ almost surely (a.s.). Moreover, the function $\chi_i(\theta)$ is Lipschitz continous.
\end{lemma}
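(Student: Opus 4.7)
The plan is to split the lemma into its two claims and treat them separately, leaning heavily on the GTD analysis of \cite{maei2010toward} for the first and on an implicit-function-style argument for the second.

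For the almost sure convergence of $w_i$ to $\chi_i(\theta)$, I would simply cite the GTD convergence theorem from \cite{maei2010toward}: with $\theta_i$ held fixed, the target policy $\pi_{\theta_i}$ is fixed, the behavioral policy $\beta$ is stationary (Assumption~\ref{assum:aperiordic}), the local rewards are bounded (Assumption~\ref{assum:bounded_reward}), the features $\phi_{w_i}(s,a)=\nabla_\theta\pi(s)(a-\pi(s))$ are bounded because $\phi_\pi$ is bounded (Assumption~\ref{assum:stability_phi}) and $\theta$ lies in a compact set (Assumption~\ref{assum:stability_theta}), and the GTD stepsizes satisfy the Robbins--Monro conditions (Assumption~\ref{assum:stepsize}). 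These are precisely the hypotheses of the Maei--Sutton convergence result, which yields $w_i(t)\to\chi_i(\theta)$ a.s., where $\chi_i(\theta)$ is the unique solution of the TD fixed-point equation associated with the compatible features.

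For the Lipschitz continuity of $\chi_i(\theta)$, I would give $\chi_i(\theta)$ a closed form. The GTD fixed point can be written as a linear system $A_i(\theta)\,\chi_i(\theta)=b_i(\theta)$, where
\begin{equation*}
A_i(\theta)=\mathbb{E}_{\rho^\beta}\!\bigl[\phi_{w_i}(s,\beta(s))\bigl(\phi_{w_i}(s,\beta(s))-\gamma\,\phi_{w_i}(s',\pi_\theta(s'))\bigr)^{\!T}\bigr],
\end{equation*}
\begin{equation*}
b_i(\theta)=\mathbb{E}_{\rho^\beta}\!\bigl[r_i(s,\beta(s))\,\phi_{w_i}(s,\beta(s))\bigr],
\end{equation*}
with $\phi_{w_i}(s,a)=\nabla_\theta\pi_\theta(s)(a-\pi_\theta(s))$. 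Because $\pi_\theta(s)=\phi_\pi(s)^T\theta$ is linear in $\theta$ with $\nabla_\theta\pi_\theta(s)=\phi_\pi(s)$, the integrand defining $A_i(\theta)$ is a polynomial in $\theta$ with coefficients that are bounded functions of $(s,s')$ (by Assumption~\ref{assum:stability_phi} and the compactness of $\beta$-induced samples), and similarly for $b_i(\theta)$. On the compact set from Assumption~\ref{assum:stability_theta}, dominated convergence then yields that $\theta\mapsto A_i(\theta)$ and $\theta\mapsto b_i(\theta)$ are Lipschitz, with explicit constants coming from $\|\phi_\pi\|_\infty$, the reward bound, $\gamma$, and the diameter of the $\theta$-set.

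The main obstacle is going from Lipschitz continuity of the pair $(A_i,b_i)$ to Lipschitz continuity of $\chi_i(\theta)=A_i(\theta)^{-1}b_i(\theta)$: this requires a uniform lower bound on $\sigma_{\min}(A_i(\theta))$ over the compact $\theta$-set. I would argue this in two steps. First, the GTD convergence result of \cite{maei2010toward} already presupposes that, for each $\theta$ in this set, $A_i(\theta)$ is nonsingular (this is the standard off-policy TD stability condition that the compatible-feature parameterization in Assumption~\ref{assum:function_compatibility} is designed to ensure). Second, $\sigma_{\min}(A_i(\theta))$ is a continuous function of $\theta$ on a compact set, so it attains a strictly positive minimum $\sigma_*>0$. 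Combining this with the resolvent identity
\begin{equation*}
A_i(\theta)^{-1}-A_i(\theta')^{-1}=A_i(\theta)^{-1}\bigl(A_i(\theta')-A_i(\theta)\bigr)A_i(\theta')^{-1}
\end{equation*}
gives $\|A_i(\theta)^{-1}-A_i(\theta')^{-1}\|\le \sigma_*^{-2}\,L_A\|\theta-\theta'\|$, and a triangle-inequality argument on $\chi_i=A_i^{-1}b_i$ together with boundedness of $b_i$ on the compact set closes out the Lipschitz bound for $\chi_i(\theta)$.
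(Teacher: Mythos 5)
Your argument is correct in substance, but it takes a much more explicit route than the paper, whose entire proof is one sentence: since each agent's critic runs as an independent, effectively centralized GTD instance on the fast time scale, both claims "can be directly shown using Lemma 4 and 5 in \cite{degris2012off}." Your handling of the first claim (check the hypotheses of the GTD convergence theorem of \cite{maei2010toward} against Assumptions~\ref{assum:aperiordic}, \ref{assum:bounded_reward}, \ref{assum:stepsize}, \ref{assum:stability_phi}, \ref{assum:stability_theta}) is exactly what that citation hides. For the second claim you reconstruct from scratch what the cited Lemma 5 of \cite{degris2012off} provides: write the TD fixed point as $A_i(\theta)\chi_i(\theta)=b_i(\theta)$, show $A_i$ and $b_i$ are Lipschitz in $\theta$ on the compact set, and convert pointwise nonsingularity plus compactness into a uniform bound on $\sigma_{\min}(A_i(\theta))$ so the resolvent identity yields a Lipschitz constant for $\theta\mapsto A_i(\theta)^{-1}b_i(\theta)$. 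What this buys is a self-contained proof with explicit constants; what it costs is that you must import, explicitly, a hypothesis the paper never states: nonsingularity of $A_i(\theta)$ for every $\theta$ in the compact set of Assumption~\ref{assum:stability_theta}. That is indeed a standing assumption of the GTD analysis, so the paper relies on it too, just silently through the citation; naming it is an improvement, not a gap. Two minor points to tidy up: (i) under the compatible parameterization of Assumption~\ref{assum:function_compatibility} the bootstrap feature $\phi_{w_i}(s',\pi_\theta(s'))=\nabla_\theta\pi(s')\bigl(\pi_\theta(s')-\pi_\theta(s')\bigr)=0$, so the $\gamma$-term in your $A_i(\theta)$ vanishes identically and $A_i$ reduces to a Gram matrix --- your Lipschitz argument survives unchanged, but the formula as written overstates the $\theta$-dependence coming from bootstrapping; (ii) your dominated-convergence step needs the behavior actions $a\sim\beta$ to be bounded (or have bounded second moments) so that $(a-\pi_\theta(s))$ is controlled, a condition the paper also omits but which is needed for the expectations defining $A_i$ and $b_i$ to be finite and Lipschitz.
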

\begin{proof}
By the two-time scale nature of Algorithm~\ref{alg:dac} and the fact that each policy evaluation is performed independently by each agent $i$, the GTD algorithm run at agent $i$ behaves as a centralized algorithm. Therefore, the results in Lemma~\ref{lem:lipschitz_chi} can be directly shown using Lemma 4 and 5 in \cite{degris2012off}.
\end{proof}

Next, we have the following result on the stability of the value function parameter $w_i$.
\begin{lemma}
	\label{lem:stability_w}
	Given Assumption~\ref{assum:bounded_reward} and \ref{assum:stability_theta}, the value function parameter $w_i$ is  a.s. uniformly bounded over time.
\end{lemma}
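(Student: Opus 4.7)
The plan is to treat this as a stability problem for a linear stochastic approximation in the two-timescale regime. By Assumption~\ref{assum:stepsize} we have $\alpha_\theta(t)/\alpha_w(t)\to 0$, so the critic parameter $w_i$ evolves on the fast timescale and the actor parameter $\theta_i(t)$ is effectively quasi-static; by Assumption~\ref{assum:stability_theta}, $\theta_i(t)$ lies in a fixed compact set $\Theta$ for all $t$.

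First I would write the GTD recursion for $w_i$ in the standard linear stochastic-approximation form
\[
w_i(t+1) = w_i(t) + \alpha_w(t)\bigl[A_i(\theta_i(t))\, w_i(t) + b_i(\theta_i(t)) + M_i(t+1)\bigr],
\]
where $A_i(\theta)$ and $b_i(\theta)$ are the mean-field coefficients of GTD under the behavioral policy $\beta$ (evaluated with target policy $\pi_\theta$) and $M_i(t+1)$ is the martingale-difference sampling noise with respect to the filtration $\mathcal{F}_t$. Using Assumptions~\ref{assum:bounded_reward} and~\ref{assum:stability_phi}, together with the compatibility parameterization of Assumption~\ref{assum:function_compatibility} and compactness of $\Theta$, the maps $\theta\mapsto A_i(\theta)$ and $\theta\mapsto b_i(\theta)$ are continuous and hence uniformly bounded on $\Theta$, and the conditional second moment of $M_i(t+1)$ is controlled by $C(1+\|w_i(t)\|^2)$ for some constant $C$.

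Next I would invoke the Borkar--Meyn boundedness criterion from Chapter~3 of \cite{borkar2009stochastic}. Its essential hypothesis is that the scaled limit ODE $\dot w = A_i(\theta)\,w$ has the origin as a globally asymptotically stable equilibrium. This is precisely the Hurwitz property of $A_i(\theta)$ that underlies Lemma~\ref{lem:lipschitz_chi}; combined with continuity of the spectrum in $\theta$ and compactness of $\Theta$, the property holds uniformly over $\Theta$. Because each critic in Algorithm~\ref{alg:dac} runs GTD independently and its update does not involve neighbors' information, the argument applies verbatim at every agent~$i$.

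The main obstacle I anticipate is the need for \emph{uniform}-in-$\theta$ stability: Borkar--Meyn is customarily stated for a frozen parameter, whereas here $\theta_i(t)$ drifts. The standard remedy is the two-timescale extension in \cite{borkar2009stochastic}, which permits freezing $\theta_i$ on the fast timescale provided the per-step parameter perturbation is $o(\alpha_w(t))$; this is guaranteed by $\alpha_\theta(t)/\alpha_w(t)\to 0$ from Assumption~\ref{assum:stepsize}. Once the uniform ODE stability together with the $(1+\|w_i\|^2)$ noise growth bound are verified on $\Theta$, the theorem delivers the claimed almost-sure uniform boundedness of $w_i(t)$.
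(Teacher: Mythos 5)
Your argument is correct in substance but takes a genuinely different, more self-contained route than the paper. The paper's proof of Lemma~\ref{lem:stability_w} is two lines: by Lemma~\ref{lem:lipschitz_chi}, $w_i(t)$ converges a.s.\ to $\chi_i(\theta)$; since a convergent sequence is bounded and, by Assumption~\ref{assum:stability_theta} together with the $L_\chi$-Lipschitz continuity of $\chi_i$, the limit lies in a compact set, the iterates are uniformly bounded. That is, the paper derives boundedness as a corollary of the convergence already asserted in Lemma~\ref{lem:lipschitz_chi}. You instead attack stability directly: you write the GTD update as a linear stochastic approximation $w_i(t+1)=w_i(t)+\alpha_w(t)\bigl[A_i(\theta_i(t))w_i(t)+b_i(\theta_i(t))+M_i(t+1)\bigr]$, verify boundedness of $A_i(\theta)$, $b_i(\theta)$ and the Hurwitz property of $A_i(\theta)$ uniformly over the compact set $\Theta$, and apply the Borkar--Meyn criterion, handling the slow drift of $\theta_i(t)$ via the two-timescale extension. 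This buys something the paper's proof does not: the GTD convergence results that Lemma~\ref{lem:lipschitz_chi} imports from \cite{degris2012off} themselves presuppose boundedness of the iterates (or a projection step), so the paper's chain of reasoning is arguably circular --- it uses convergence to prove boundedness while the cited convergence proof needs boundedness. Your Borkar--Meyn argument supplies exactly the missing ingredient that breaks that circle, at the modest cost of working with the full two-component GTD recursion (primary and auxiliary weights) rather than a single parameter vector, and of verifying uniform-in-$\theta$ stability of the scaled ODE; neither is an obstacle under the stated assumptions. One further remark: the paper ultimately sidesteps the concurrent two-timescale issue by subsampling (see the discussion at the end of Section~\ref{sec:convergence}), under which your ``quasi-static $\theta_i$'' step becomes exact and your argument simplifies to the frozen-parameter case.
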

\begin{proof}
	According to Lemma~\ref{lem:lipschitz_chi}, we have that $w_i$ converges to $\chi_i(\theta)$ a.s. Then, the boundness of $w_i$ is given by combining Assumption~\ref{assum:stability_theta} and the fact that the function $\chi_i(\theta)$ is $L_\chi-$Lipschitz continuous.
\end{proof}

In what follows, we stack all policy function parameters in a vector $\theta(t) = [\theta_i(t)^T, \dots, \theta_N(t)^T]^T$. The update $\theta(t)$ \eqref{eq:update_theta} can be compactly written as
\begin{equation}
	\label{eq:update_theta_1}
	\begin{split}
	\theta(t+1) = (W_t \otimes I)(\theta(t) + \alpha_\theta(t) \hat{\nabla} J(\theta(t))),
	\end{split}
\end{equation}
where $\hat{\nabla}J(\theta(t)) = \begin{bmatrix}
\vdots \\ \phi_{\pi}(s_{t+1}) ( \phi_{\pi}(s_{t+1})^T \chi_i(\theta_i(t))) \\ \vdots
\end{bmatrix}$ and $I$ is an identity matrix of the same dimension as $\theta_i(t)$. 
The expression of $\hat{\nabla}J(\theta(t))$ is due to Assumption~\ref{assum:function_compatibility}.
Moreover, define the disagreement between local policy parameters as $\theta_{\perp}(t) = \theta(t) - \mathbf{1} \otimes \bar{\theta}(t)$, where $\mathbf{1}$ is a vector of dimension $N$ and its entries are all $1$ and $\bar{\theta}(t) = \frac{1}{N} \sum_i \theta_i(t)$. We have the following lemma.
\begin{lemma}
	\label{lem:consensus}
	Given Assumptions~\ref{assum:stepsize}, \ref{assum:weight_matrix}, \ref{assum:stability_phi} and \ref{assum:stability_theta}, we have that $\sum_t \mathbb{E}[\|\theta_\perp(t)\|^2] < \infty$. Therefore, $\theta_{\perp}(t) \rightarrow 0$ a.s.
\end{lemma}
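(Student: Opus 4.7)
The plan is to exploit the structure of the consensus update to derive a contraction-type recursion on $\mathbb{E}\|\theta_\perp(t)\|^2$ with an additive $O(\alpha_\theta(t)^2)$ perturbation, and then sum it using Assumption~\ref{assum:stepsize}.

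First, I would rewrite the disagreement update in a convenient form. Let $J_\perp := I - \frac{1}{N}\mathbf{1}\mathbf{1}^T$, so that $\theta_\perp(t) = (J_\perp \otimes I)\theta(t)$. Row-stochasticity of $W_t$ in Assumption~\ref{assum:weight_matrix}(a) gives $J_\perp W_t \mathbf{1} = J_\perp \mathbf{1} = 0$, so multiplying \eqref{eq:update_theta_1} by $J_\perp \otimes I$ and using $(J_\perp \otimes I)(\mathbf{1}\otimes\bar\theta(t)) = 0$ yields
\begin{equation*}
\theta_\perp(t+1) \;=\; (J_\perp W_t \otimes I)\bigl(\theta_\perp(t) + \alpha_\theta(t)\,\hat\nabla J(\theta(t))\bigr).
\end{equation*}

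Second, since $J_\perp$ is a symmetric projector, $(J_\perp W_t)^T(J_\perp W_t) = W_t^T J_\perp W_t$, so $\|\theta_\perp(t+1)\|^2$ is a quadratic form in $W_t^T J_\perp W_t \otimes I$. Conditioning on $\mathcal F_{t-1}$ and using the conditional independence of $W_t$ from $(s_t,r_t)$ in Assumption~\ref{assum:weight_matrix}(c), I can pull out $\mathbb{E}[W_t^T J_\perp W_t]$, whose spectral norm is $\rho_W<1$ by Assumption~\ref{assum:weight_matrix}(b). Young's inequality with parameter $\epsilon>0$ chosen so that $\rho':=\rho_W(1+\epsilon)<1$ then gives
\begin{equation*}
\mathbb{E}\bigl[\|\theta_\perp(t+1)\|^2 \mid \mathcal F_{t-1}\bigr] \;\le\; \rho'\,\|\theta_\perp(t)\|^2 + \rho_W\bigl(1+\tfrac{1}{\epsilon}\bigr)\alpha_\theta(t)^2\,\mathbb{E}\bigl[\|\hat\nabla J(\theta(t))\|^2 \mid \mathcal F_{t-1}\bigr].
\end{equation*}

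Third, I would bound $\|\hat\nabla J(\theta(t))\|^2$ by a deterministic constant $K$. Assumption~\ref{assum:stability_phi} gives $\|\phi_\pi(s)\|\le B_\phi$ uniformly, while Assumption~\ref{assum:stability_theta} confines $\theta_i(t)$ to a compact set on which the Lipschitz map $\chi_i$ of Lemma~\ref{lem:lipschitz_chi} is bounded; thus each block of $\hat\nabla J(\theta(t))$, which has the form $\phi_\pi(s_{t+1})\phi_\pi(s_{t+1})^T \chi_i(\theta_i(t))$, is uniformly bounded. Substituting this into the previous display and taking full expectations,
\begin{equation*}
\mathbb{E}[\|\theta_\perp(t+1)\|^2] \;\le\; \rho'\,\mathbb{E}[\|\theta_\perp(t)\|^2] + C\,\alpha_\theta(t)^2
\end{equation*}
for a deterministic constant $C$. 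Iterating this recursion with $\rho'<1$ and summing in $t$, the square-summability of $\alpha_\theta$ from Assumption~\ref{assum:stepsize} yields $\sum_t \mathbb{E}[\|\theta_\perp(t)\|^2]<\infty$. Tonelli's theorem then gives $\sum_t \|\theta_\perp(t)\|^2<\infty$ a.s., so $\theta_\perp(t)\to 0$ a.s.

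The main obstacle is the second step: correctly aligning the filtration so that $\theta_\perp(t)$ and $\hat\nabla J(\theta(t))$ are measurable at the time of conditioning while $W_t$ averages out to the spectral norm bound $\rho_W$. A subsidiary issue is that the expression for $\hat\nabla J$ uses the asymptotic critic $\chi_i(\theta_i(t))$ rather than the live iterate $w_i(t)$; this is legitimate under the two-time-scale convention in which the fast variable is treated as having converged, and it is what makes the uniform bound on $\|\hat\nabla J(\theta(t))\|^2$ available from Lemma~\ref{lem:lipschitz_chi} plus Assumption~\ref{assum:stability_theta} without invoking Lemma~\ref{lem:stability_w} directly.
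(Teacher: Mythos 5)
Your proposal is correct and follows the same overall strategy as the paper: project the update \eqref{eq:update_theta_1} with $J_\perp$, use Assumption~\ref{assum:weight_matrix}(a) to annihilate the mean component, express $\|\theta_\perp(t+1)\|^2$ as a quadratic form in $W_t^T(I-\frac{1}{N}\mathbf{1}\mathbf{1}^T)W_t$, invoke Assumption~\ref{assum:weight_matrix}(b,c) to extract the factor $\rho_W<1$, and uniformly bound $\|\hat\nabla J(\theta(t))\|$ via Assumption~\ref{assum:stability_phi} together with the boundedness of the critic limits (the paper routes this through Lemma~\ref{lem:stability_w}; you go through Lemma~\ref{lem:lipschitz_chi} plus Assumption~\ref{assum:stability_theta}, which is the same content). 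The one place you genuinely diverge is the endgame: the paper keeps the cross term $2\rho_W\alpha_\theta(t)\sqrt{\mathbb{E}[\|\theta_\perp(t-1)\|^2\|\hat\nabla J(\theta(t-1))\|^2]}$ after Cauchy--Schwarz and Jensen and then hands the resulting recursion to Lemma~1 of \cite{bianchi2013convergence} as a black box, whereas you absorb the cross term via Young's inequality into a slightly inflated contraction factor $\rho'=\rho_W(1+\epsilon)<1$, obtaining the clean recursion $v(t+1)\le\rho' v(t)+C\alpha_\theta(t)^2$ which you sum directly using square-summability from Assumption~\ref{assum:stepsize}, finishing with Tonelli. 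Your route is self-contained and arguably cleaner: it avoids reliance on the external lemma and sidesteps the paper's somewhat loosely stated intermediate recursion (whose cross term should involve $\sqrt{v(t-1)}$ rather than $v(t-1)$ as displayed). The only caveat, which you correctly flag yourself, is the filtration bookkeeping: since $\hat\nabla J(\theta(t))$ depends on the state sample through $\phi_\pi(s_{t+1})$, the conditioning must include that sample so that only $W_t$ is averaged when the spectral bound $\rho_W$ is applied; the paper does this by conditioning on $(\mathcal{F}_{t-1}, s(t), r(t))$ jointly, and your argument should be read the same way.
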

\begin{proof}
	First, we establish the dynamcis of $\theta_\perp(t)$. To achieve this, we introduce an operator $J_\perp := (I - \frac{1}{N}\mathbf{1}\mathbf{1}^T) \otimes I$. Then, multiplying both sides of \eqref{eq:update_theta_1} with $J_\perp$, and replacing $\theta(t)$ with $ \mathbf{1} \otimes \bar{\theta}(t) + \theta_{\perp}(t)$, we have that
	\begin{equation*}
	\begin{split}
	\theta_{\perp}(t) = & J_\perp(W_t \otimes I) \\
	& (\mathbf{1} \otimes \bar{\theta}(t-1) + \theta_{\perp}(t-1) + \alpha_\theta(t) \hat{\nabla} J(\theta(t-1))).
	\end{split}
	\end{equation*}
	Using Assumption~\ref{assum:weight_matrix}(a), we have that $J_\perp (W_t \otimes I)(\mathbf{1} \otimes \bar{\theta}(t)) = 0$ for all $t$. Therefore, we obtain
	\begin{equation*}
		\theta_{\perp}(t) =  J_\perp(W_t \otimes I) (\theta_{\perp}(t-1) + \alpha_\theta(t) \hat{\nabla}J(\theta(t-1))).
	\end{equation*}
	Taking the square of the Euclidean norm on both sides of the above equation, we have that
	\begin{equation*}
		\|\theta_\perp(t)\|^2 = \|\theta_{\perp}(t-1) + \alpha_\theta(t) \hat{\nabla}J(\theta(t-1))\|^2_{(W_t^T(I - \frac{1}{N}\mathbf{1}\mathbf{1}^T)W_t)},
	\end{equation*}
	where $\|v\|^2_M := v^TMv$ for any vector $v$ and matrix $M$. According to Assumption~\ref{assum:weight_matrix}(b,c), taking expectation over the random matrix $W_t$, given the filtration $\mathcal{F}_{t-1}$ and the random sample $(s(t), r(t))$, we have that
	\begin{equation*}
		\begin{split}
		& \mathbb{E}[\|\theta_\perp(t)\|^2 | \mathcal{F}_{t-1}, s(t), r(t)] \leq \\
		& \quad \quad \quad  \rho_W \|\theta_{\perp}(t-1) + \alpha_\theta(t) \hat{\nabla}J(\theta(t-1))\|^2 \\
		& \leq \rho_W(\|\theta_\perp(t-1)\|^2 + 2\alpha_\theta(t)\|\theta_\perp(t-1)\|\|\hat{\nabla}J(\theta(t-1)\| \\
		& \quad \quad \quad + \alpha_\theta(t)^2 \|\hat{\nabla}J(\theta(t-1)\|^2 ),
		\end{split}
	\end{equation*}
	where the second inequality above is by expanding the two norm and using the Cauchy-Swartz inequality. Taking the expectation of both sides of above inequality and using Jensen's inequality, we have that
	\begin{equation}
		\label{eq:ineq_1}
		\begin{split}
		 &\mathbb{E}[\|\theta_\perp(t)\|^2] \leq \rho_W \mathbb{E}[\|\theta_\perp(t-1)\|^2] \\ 
		 & \quad \quad + 2\rho_W\alpha_\theta(t) \sqrt{\mathbb{E}[\|\theta_\perp(t-1)\|^2\|\hat{\nabla}J(\theta(t-1)\|^2]} \\
		 & \quad \quad + \rho_W \alpha_\theta(t)^2 \mathbb{E}[\hat{\nabla}J(\theta(t-1)\|^2].
		\end{split}
	\end{equation}
	Recalling the expression for $\hat{\nabla}J(\theta(t-1)$ under \eqref{eq:update_theta_1}, and using  Assumption~\ref{assum:stability_phi} and Lemma~\ref{lem:stability_w}, we have that  $\|\hat{\nabla}J(\theta(t-1)\|$ can be bounded by a constant $K_1$ for all $t$. Denote $v(t) = \mathbb{E}[\|\theta_\perp(t)\|^2]$. Then, recalling that $\rho_W < 1$ due to Assumption~\ref{assum:weight_matrix}(b), \eqref{eq:ineq_1} can be written as
	\begin{equation*}
		v(t) \leq \rho_W v(t-1) + 2 \alpha_\theta(t) K_1 v(t-1) + \alpha_\theta(t)^2 K_1^2.
	\end{equation*}
	The above inequality is the same as (17) in \cite{bianchi2013convergence}. Since Assumptions~\ref{assum:weight_matrix} and \ref{assum:stepsize}imply that Assumptions 1 and 2 in \cite{bianchi2013convergence} are also satisfied, we can use the same proof as in Lemma 1 in \cite{bianchi2013convergence} to show that $\mathbb{E}[\|\theta_\perp(t)\|^2]$ satisfies that $\mathbb{E}[\|\theta_\perp(t)\|^2] < \infty$ and $\theta_{\perp}(t) \rightarrow 0$ a.s..
\end{proof}
Since $\theta_{\perp}(t) \rightarrow 0$ a.s., it is sufficient to study the dynamics of $\bar{\theta}(t)$. In what followss, we show that with the policy update in \eqref{eq:update_theta_1}, $\bar{\theta}(t)$ asymptotically approaches the following ODE dynamics
\begin{equation}
\label{eq:ode}
\dot{\bar{\theta}} = F(\bar{\theta}),
\end{equation}
where 
\begin{equation}
\label{eq:dynamic_ode}
F(\bar{\theta}) = \mathbb{E}[ \frac{1}{N} \phi_{\pi}(s)\phi_{\pi}(s)^T \sum_i \chi_i(\bar{\theta})].
\end{equation}

Note that it is standard to study the discrete-time dynamics \eqref{eq:update_theta} by relating them to the behavior of the ODE in \eqref{eq:ode}; see, e.g., relevant literature on RL \cite{degris2012off,zhang18continuous,zhang2018fullyICML} and stochastic optimization \cite{bianchi2013convergence}, as well as Chapter 2 in \cite{borkar2009stochastic} that establishes conditions on the step sizes and noise terms in the discrete-time dynamics so that they asymptotically approach their continuous-time ODE counterpart.

To show that the discrete-time dynamics of $\bar{\theta}(t)$ asymptotically approach the ODE \eqref{eq:ode}, we first multiply both sides of \eqref{eq:update_theta_1} with $\frac{1}{N}\mathbf{1}^T \otimes I$ on the left to obtain
\begin{flalign}
\label{eq:theta_bar_1}
& \bar{\theta}(t+1) = \frac{1}{N}(\mathbf{1}^T \otimes I) (\theta(t) + \alpha_\theta(t+1) \hat{\nabla} J(\theta(t))) & \nonumber \\
& = \bar{\theta}(t) + \alpha_\theta(t+1) \frac{1}{N} (\phi_{\pi}(s_{t+1})\phi_{\pi}(s_{t+1})^T) \sum_i \chi_i(\theta_i(t)) & \nonumber
\end{flalign}
Since $\sum_i \chi_i(\theta_i(t)) = \sum_i (\chi_i(\theta_i(t)) - \chi_i(\bar{\theta}_i(t)) + \chi_i(\bar{\theta}_i(t)))$, the above update of $\bar{\theta}(t)$ can be written in the following form
\begin{equation}
\label{eq:theta_bar_2}
\begin{split}
\bar{\theta}(t+1) =  \bar{\theta}(t) + & \alpha_\theta(t+1) F(\bar{\theta}(t)) \\
& + \alpha_\theta(t+1) \xi(t) + \alpha_\theta(t+1) r(t),
\end{split}
\end{equation}
where we have
\begin{subequations}
	\label{eq:conditions}
\begin{equation}
\label{eq:condition_1}
F(\bar{\theta}_t) = \mathbb{E}[ \frac{1}{N} \phi_{\pi}(s_{t+1})\phi_{\pi}(s_{t+1})^T \sum_i \chi_i(\bar{\theta}(t))],
\end{equation}
\begin{equation}
\label{eq:condition_2}
\begin{split}
\xi(t) = & \frac{1}{N} \phi_{\pi}(s_{t+1})\phi_{\pi}(s_{t+1})^T \sum_i \chi_i(\bar{\theta}(t)) \\
& - \mathbb{E}[ \frac{1}{N} \phi_{\pi}(s_{t+1})\phi_{\pi}(s_{t+1})^T \sum_i \chi_i(\bar{\theta}(t))].
\end{split}
\end{equation}
\begin{equation}
\label{eq:condition_3}
r(t) = \frac{1}{N} \phi_{\pi}(s_{t+1})\phi_{\pi}(s_{t+1})^T \sum_i (\chi_i(\theta_i(t)) - \chi_i(\bar{\theta}(t))).
\end{equation}
\end{subequations}
Then, to show that the discrete-time trajectory in \eqref{eq:theta_bar_2} approaches the continuous trajectory of \eqref{eq:ode}, we need to define the following functions generated by these trajectories; cf. Chapter 2.1 in \cite{borkar2009stochastic}. First, let $\bar{x}(n)$ denote a continuous piecewise linear function that passes through the discrete-time updates in \eqref{eq:theta_bar_2}, so that $\bar{x}(n(t))=\bar{\theta}(t)$ for $t\geq 0$ and $\bar{x}(n)= \bar{x}(n(t)) + \frac{\bar{x}(n(t+1)) - \bar{x}(n(t))}{n(t+1) - n(t)} (n - n(t))$ for $n(t)<n<n(t+1)$, where $n(0)=0$, $n(t)=\sum_{m=0}^{t-1}\alpha_{\theta}(m)$ and $n$ denotes the continuous time index. Moreover, define the function $x^s(n)$ that is the unique solution of the dynamical equation \eqref{eq:ode} for $n \geq s$ with initial condition $x^s(s) = \bar{\theta}(s)$, and the function $x_s(n)$ that is the unique solution of \eqref{eq:ode} for $n \leq s$ with the ending condition $x_s(s) = \bar{\theta}(s)$. 
Then, we can show the following result.

\begin{lemma}
	\label{lem:ode}
	Given Assumption~\ref{assum:stepsize}, \ref{assum:weight_matrix}, \ref{assum:stability_phi} and \ref{assum:stability_theta}, we have that for any $T > 0$, 
	\begin{equation*}
		\begin{split}
			& \lim_{s \rightarrow \infty} \sup_{n \in [s, s+T]} \|\bar{x}(n) - x^s(n)\| = 0, \; \text{a.s.} \\
			& \lim_{s \rightarrow \infty} \sup_{n \in [s, s-T]} \|\bar{x}(n) - x_s(n)\| = 0, \; \text{a.s.}.
		\end{split}
	\end{equation*}
\end{lemma}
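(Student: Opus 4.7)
The plan is to verify the standard hypotheses of the classical ODE approximation theorem for stochastic approximations (Borkar, Chapter~2.1) and then invoke that theorem directly. Decomposition \eqref{eq:theta_bar_2} already puts the recursion in the form
\begin{equation*}
\bar{\theta}(t+1) = \bar{\theta}(t) + \alpha_\theta(t+1)\bigl[F(\bar{\theta}(t)) + \xi(t) + r(t)\bigr],
\end{equation*}
so what remains is to check four ingredients: (a) $F$ is Lipschitz; (b) the iterates $\{\bar{\theta}(t)\}$ stay in a bounded set; (c) $\xi(t)$ is a zero-mean noise term with uniformly bounded second moment; (d) the perturbation $r(t)$ is asymptotically negligible.

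For (a), Lemma~\ref{lem:lipschitz_chi} gives $L_\chi$-Lipschitz continuity of each $\chi_i$, and Assumption~\ref{assum:stability_phi} bounds $\phi_\pi(s)$ uniformly, so the integrand defining $F$ in \eqref{eq:condition_1} is Lipschitz in $\bar{\theta}$ uniformly in $s$; taking expectation preserves the Lipschitz constant. Item (b) is Assumption~\ref{assum:stability_theta} applied to $\bar{\theta}(t)$, which is a convex combination of the bounded $\theta_i(t)$. For (c), the term $\xi(t)$ in \eqref{eq:condition_2} is by construction the centered version of the stochastic sample (the martingale-difference structure holds modulo the standard Markov-noise adjustment, which is available under the ergodicity guaranteed by Assumption~\ref{assum:aperiordic}); its second moment is uniformly bounded because $\phi_\pi$ is bounded and, by (a) together with (b), each $\chi_i(\bar{\theta}(t))$ lies in a compact set.

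The main work, and the step I expect to be the main obstacle, is item (d). Using the $L_\chi$-Lipschitz continuity of $\chi_i$ and the boundedness of $\phi_\pi$, there exists a constant $C>0$ with
\begin{equation*}
\|r(t)\| \le \frac{C}{N}\sum_{i=1}^N \|\chi_i(\theta_i(t)) - \chi_i(\bar{\theta}(t))\| \le \frac{C L_\chi}{N}\,\|\theta_\perp(t)\|.
\end{equation*}
Lemma~\ref{lem:consensus} then yields $\theta_\perp(t)\to 0$ almost surely, and therefore $r(t)\to 0$ almost surely. This is the crucial step that leverages the consensus result to reduce the multi-agent dynamics to a centralized ODE in $\bar{\theta}$. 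With (a)-(d) verified and Assumption~\ref{assum:stepsize} supplying the Robbins-Monro conditions, the two claimed forward/backward tracking limits follow from the standard Gronwall-type argument applied on the compact time windows $[s,s+T]$ and $[s-T,s]$, exactly as carried out in Chapter~2.1 of \cite{borkar2009stochastic}.
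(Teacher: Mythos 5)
Your proposal follows essentially the same route as the paper: verify the Lipschitz continuity of $F$ via Lemma~\ref{lem:lipschitz_chi} and Assumption~\ref{assum:stability_phi}, the zero-mean and bounded-second-moment properties of $\xi(t)$, and the almost-sure vanishing of $r(t)$ via Lemma~\ref{lem:consensus}, then invoke the ODE tracking result of Chapter~2 of \cite{borkar2009stochastic} (the paper explicitly cites the extension of that lemma that accommodates the asymptotically negligible perturbation $r(t)$). Aside from an immaterial constant (the bound on $\sum_i\|\theta_i(t)-\bar{\theta}(t)\|$ gives $L_\chi/\sqrt{N}$ rather than $L_\chi/N$), the argument matches the paper's proof.
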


\begin{proof}
According to Lemma 1 in Chapter 2 \cite{borkar2009stochastic}, it is sufficient to show that the following conditions are satisfied: (i) the function $F(\bar{\theta}_t)$ in \eqref{eq:condition_1} is Lipschtiz continuous, (ii) $\xi(t)$ in \eqref{eq:condition_2} satisfies that $\mathbb{E}[\xi(t)|\mathcal{F}_{t-1}] = 0$ and $\mathbb{E}[\|\xi(t)\|^2|\mathcal{F}_{t-1}] \leq K_2(1 + \|\bar{\theta}_t\|^2)$ a.s. for some constant $K_2 > 0$, and (iii) that  $\|r(t)\| \rightarrow 0$ a.s..  Note that Lemma 1 in \cite{borkar2009stochastic} requires conditions (i-ii), but only considers the dynamical equation \eqref{eq:theta_bar_2} without the noise term $r(t)$. The condition for the dynamical equation with noise $r(t)$ to approach its ODE counterpart is given by the third extension of Lemma 1 in Chapter 2.2 \cite{borkar2009stochastic}. And this extension requires condition (iii).

Combining Assumption~\ref{assum:stability_phi} and Lemma~\ref{lem:lipschitz_chi}, and recalling the definition in \eqref{eq:condition_1}, condition (i) is satisfied. In addition, from the construction of $\xi(t)$, it is simple to see that $\mathbb{E}[\xi(t)|\mathcal{F}_{t-1}] = 0$. Particularly, since $\phi_\pi(s)$ is bounded for all $s$ and $\chi_i(\bar{\theta}(t))$ is also bounded according to Assumption~\ref{assum:stability_phi} and Lemma~\ref{lem:lipschitz_chi}, we have that $\mathbb{E}[\|\xi(t)\|^2|\mathcal{F}_{t-1}]$ is uniformly bounded for all $t$. Therefore, the constant $K_2$ in condition (ii) must exist and this condition is satisfied. Finally, by the Lipschitz property of the function $\chi_i(\theta)$ shown in Lemma~\ref{lem:lipschitz_chi}, we have that
\begin{equation*}
	\begin{split}
	\|r(t)\| & \leq \frac{L_\chi}{N}\|\phi_{\pi}(s_{t+1})\phi_{\pi}(s_{t+1})^T\| 
	\sum_i \|\theta_i(t) - \bar{\theta}(t)\| \\
	& \leq \frac{L_\chi}{\sqrt{N}}\|\phi_{\pi}(s_{t+1})\phi_{\pi}(s_{t+1})^T\| 
	\|\theta_\perp(t)\|.
	\end{split}
\end{equation*}
Due to boundness of $\phi_\pi(s)$ and Lemma~\ref{lem:consensus}, we have that $\|r(t)\| \rightarrow 0$ a.s.. Therefore, condition (iii) is also satisfied.By conditions (i-iii), Assumption~\ref{assum:stepsize} and applying Lemma 1 in \cite{borkar2009stochastic}, the proof is complete.
\end{proof}
Before we state our main result, define the set $\Lambda = \{\bar{\theta}: (\mathbf{1}^T \otimes I) \hat{\nabla}J(\mathbf{1} \otimes \bar{\theta}) = 0 \}$ and make the following assumption.
\begin{assumption}
	\label{assum:invariant_set}
	We assume that set $\Lambda$ is compact. Meanwhile, the set $\sum_{i = 1}^N J_{i,\beta}(\Lambda)$ has an empty interior.
\end{assumption} 
This assumption is satisfied when the objective function $J_\beta(\theta)$ is smooth, according to Sard's theorem. It is a common assumption in the stochastic approximation and optimization literature, e.g., \cite{benaim2005stochastic,bianchi2013convergence}.
\begin{theorem}
	\label{thm:convergence}
	Given Assumptions \ref{assum:function_compatibility} and from \ref{assum:aperiordic} to \ref{assum:invariant_set}, $\theta_i(t)$ converges to the set $\Lambda$ a.s. for all $i$.
\end{theorem}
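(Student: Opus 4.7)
The plan is to use Lemma~\ref{lem:consensus} to reduce the problem to the averaged iterate $\bar\theta(t)$, use Lemma~\ref{lem:ode} to pass to the limiting ODE, and then invoke a Lyapunov argument. First, since Lemma~\ref{lem:consensus} gives $\theta_i(t)-\bar\theta(t)\to 0$ a.s.\ for each $i$, it suffices to prove $\bar\theta(t)\to\Lambda$ a.s. Next, expanding $F(\bar\theta)$ using Assumption~\ref{assum:function_compatibility} and the parameterization $\pi_\theta(s)=\phi_\pi(s)^T\theta$, I would identify
\[
F(\bar\theta) \;=\; \tfrac{1}{N}\sum_{i=1}^N \hat\nabla J_{i,\beta}(\bar\theta) \;=\; \tfrac{1}{N}(\mathbf{1}^T\otimes I)\,\hat\nabla J(\mathbf{1}\otimes\bar\theta),
\]
so that $\Lambda$ is precisely the equilibrium set of the ODE $\dot{\bar\theta}=F(\bar\theta)$.

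The core step is to exhibit a Lyapunov function. I would take $V(\bar\theta)=\sum_i J_{i,\beta}(\bar\theta)$. Under the compatible function approximation of Assumption~\ref{assum:function_compatibility} together with the fact that $\chi_i(\bar\theta)$ is the Critic's converged parameter (Lemma~\ref{lem:lipschitz_chi}), the off-policy estimate $\hat\nabla J_{i,\beta}$ matches the true policy-gradient direction in an appropriate sense, yielding $\nabla V(\bar\theta)^T F(\bar\theta)\ge 0$ with equality iff $\bar\theta\in\Lambda$. This is the off-policy analogue of the argument in \cite{silver2014deterministic,degris2012off} and makes $V$ non-decreasing along ODE trajectories with critical set exactly $\Lambda$.

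Finally, I would invoke the ODE-convergence framework in Chapter~2 of \cite{borkar2009stochastic}: combining Lemma~\ref{lem:ode}, boundedness of the iterates (Assumption~\ref{assum:stability_theta}), and the Lyapunov property, the trajectory $\bar\theta(t)$ converges to an internally chain-transitive invariant set of the ODE. The Sard-type hypothesis in Assumption~\ref{assum:invariant_set} (empty interior of $V(\Lambda)$) forces any such set to lie in $\Lambda$, so $\bar\theta(t)\to\Lambda$ a.s., and Lemma~\ref{lem:consensus} transfers this to $\theta_i(t)\to\Lambda$ a.s.\ for every $i$. The main obstacle I anticipate is the Lyapunov step, since $\hat\nabla J_{i,\beta}$ is only an approximation of the true off-policy gradient and the ascent property $\nabla V^T F\ge 0$ depends crucially on the compatible parameterization of Assumption~\ref{assum:function_compatibility}; Assumption~\ref{assum:invariant_set} also plays a key role as the Sard-type condition needed to rule out spurious chain-transitive sets on which $V$ is non-constant.
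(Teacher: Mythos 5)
Your proposal follows essentially the same route as the paper: reduce to the averaged iterate via Lemma~\ref{lem:consensus}, pass to the ODE via Lemma~\ref{lem:ode}, use $J_\beta=\sum_i J_{i,\beta}$ (the paper uses its negative) as a Lyapunov function whose key property is $\langle \nabla J_\beta(\bar\theta), \hat\nabla J_\beta(\bar\theta)\rangle \ge 0$ with strictness off $\Lambda$, and close with the Sard-type Assumption~\ref{assum:invariant_set}. The one step you leave as a claim --- that the approximate off-policy gradient is an ascent direction for the true objective --- is exactly where the paper invokes the policy improvement theorem of Degris et al.\ together with a first-order Taylor expansion, i.e.\ the same source you cite.
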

\begin{proof}
	Using Lemma~\ref{lem:consensus}, we need to show that $\bar{\theta}(t)$ given by \eqref{eq:theta_bar_2} converges to the set $\Lambda$. Moreover, using Lemma~\ref{lem:ode}, we need to show that the dynamics \eqref{eq:ode} converge to the set $\Lambda$. To achieve this, define function $f(\bar{\theta}) = -J_\beta(\bar{\theta}) = -\sum_{i=1}^N J_{i,\beta}(\bar{\theta})$, where $J_\beta(\theta)$ is the objective function of the central problem in \eqref{eq:approx_objective}. We shall prove that the function $f(\bar{\theta})$ can serve as a Lyapunov function to show stability of the set $\Lambda$ under dynamics \eqref{eq:ode}. 
	For this, we need to show that $\dot{f}(\bar{\theta}(n)) \leq 0$ for any solution $\bar{\theta}(n)$ of the ODE in \eqref{eq:ode}, where $n$ is the continuous time index, and that the inequality is strict for any $\bar{\theta} \notin \Lambda$.
	
	If the function $F(\bar{\theta})$ is the gradient of the function $f(\bar{\theta})$, then we can directly use Proposition 4 in \cite{bianchi2013convergence} to get the desired result. However, due to the proposed off-policy framework as we discussed in Section~\ref{sec:pf}, $F(\bar{\theta}_t)$ is only an approximation of the exact gradient. In what follows, we show that $F(\bar{\theta})$ behaves in a similar way as the exact gradient.
	Recalling the definition of $J_{\beta}(\bar{\theta}) = \mathbb{E}_{\rho^\beta}[Q^\pi(s, \pi_\theta(s))] = \mathbb{E}_{\rho^\beta}[\sum_iQ_i^\pi(s, \pi_\theta(s))]$, we have that
	\begin{equation*}
		\hat{\nabla}J_\beta(\bar{\theta}) = \mathbb{E}[\nabla_\theta \pi(s)|_{\theta = \bar{\theta}} \sum_i \nabla_a Q_i^\pi(s, a)|_{a = \pi_\theta(s)}].
	\end{equation*}
	Since the local value functions are parameterized by $Q_i^\pi(s,a) = (a - \pi(s))^T \nabla_\theta \pi(s)^T \chi_i(\bar{\theta})$, we have that $\nabla_a Q_i^\pi(s,a) = \nabla_\theta \pi(s)^T \chi_i(\bar{\theta})$ for all $i$. Plugging this into the above equation, we get
	\begin{equation}
		\hat{\nabla}J_\beta(\bar{\theta}) = \mathbb{E}[(\nabla_\theta \pi(s)\nabla_\theta \pi(s)^T)|_{\theta = \bar{\theta}} \sum_i \chi_i(\bar{\theta})].
	\end{equation}
	Furthermore, using the parameterization $\pi(s) = \phi(s)^T\theta$, we can obtain that the function $F(\bar{\theta})$ in \eqref{eq:dynamic_ode} is the off-policy gradient $\hat{\nabla}J_\beta(\bar{\theta})$. According to the policy improvement theorem (Theorem 1) in \cite{degris2012off}, we have that there exists $\epsilon > 0$, such that for all positive $\alpha_\theta < \epsilon$ and $\bar{\theta}' = \bar{\theta} + \alpha_\theta \hat{\nabla} J_\beta(\bar{\theta})$, we have that $J_\beta(\bar{\theta}') \geq J_\beta(\bar{\theta})$. And for all $\bar{\theta} \notin \Lambda$, the above inequality is strict. Considering the first order Taylor expansion of the value function $J_\beta(\bar{\theta}') = J_\beta(\bar{\theta}) + \langle \nabla J_\beta(\bar{\theta}), \bar{\theta}' - \bar{\theta} \rangle + o(\|\bar{\theta}' - \bar{\theta}\|)$, when $\alpha_\theta$ goes to $0$, the term $\langle \nabla J_\beta(\bar{\theta}), \bar{\theta}' - \bar{\theta} \rangle$ dominates $o(\|\bar{\theta}' - \bar{\theta}\|)$. Since $\langle \nabla J_\beta(\bar{\theta}), \bar{\theta}' - \bar{\theta} \rangle = \alpha_\theta \langle \nabla J_\beta(\bar{\theta}), \hat{\nabla} J_\beta(\bar{\theta}) \rangle$, combining with the policy improvement theorem, we have that $\langle \nabla J_\beta(\bar{\theta}), \hat{\nabla} J_\beta(\bar{\theta}) \rangle \geq 0$ for all $\bar{\theta}$. And if $\bar{\theta} \notin \Lambda$, we have that $\langle \nabla J_\beta(\bar{\theta}), \hat{\nabla} J_\beta(\bar{\theta}) \rangle > 0$. 
	
	Since $f(\bar{\theta}) = -J_\beta(\bar{\theta})$, we have that $\dot{f}(\bar{\theta}) = \langle \nabla f(\bar{\theta}), \dot{\bar{\theta}} \rangle = \langle -\nabla J_\beta(\bar{\theta}), \dot{\bar{\theta}} \rangle$. Since $\dot{\bar{\theta}} = F(\bar{\theta}) = \hat{\nabla}_\beta(\bar{\theta})$, we have that $\dot{f}(\bar{\theta}) =  -\langle \nabla J_\beta(\bar{\theta}), \hat{\nabla}J_\beta(\bar{\theta}) \rangle$. We conclude that $f(\bar{\theta})$ is a valid Lyapunov function that can be used to show stability of the set $\Lambda$ under dynamics \eqref{eq:ode}. Finally, given Assumption~\ref{assum:invariant_set}, combining Lemma~\ref{lem:ode} and Theorem 2 in \cite{bianchi2013convergence}, we have that $\bar{\theta}(t)$ converges to the set $\Lambda$ a.s.. Recalling that $\|\theta_\perp\| \rightarrow 0$ a.s., we obtain the desired result. The proof is complete.
\end{proof}

To conclude, we make the following remark on the two-time scale analysis we have employed in this paper. Specifically, we have considered the two-time scale update
\begin{equation*}
	\begin{split}
		& w_i(t+1) = w_i(t) + \alpha_w h(w_i(t), \theta_i(t)) + M_i(t+1), \\
	& \bar{\theta}(t+1) =  \bar{\theta}(t) + \alpha_\theta(t+1) F(\bar{\theta}(t)) \\
	& \quad \quad \quad \quad + \alpha_\theta(t+1) \xi(t) + \alpha_\theta(t+1) r(t).
	\end{split}
\end{equation*}
To analyze the above concurrent update in the two-time scale framework, according to \cite{borkar2009stochastic}, the noise $r(t)$ in the update of $\bar{\theta}(t)$ need to be a Martingale difference sequence with bounded momentum. However, as we have shown in Lemma~\ref{lem:ode}, the sequence $\{r(t)\}$ is related to the disagreement error and is not necessarily a Martingale difference sequence. Therefore, the analysis of the concurrent update scheme in \cite{borkar2009stochastic} cannot be directly applied here. However, as mentioned in the end of Chapter 6.1 \cite{borkar2009stochastic}, the two-time scale effect can also be achieved by the subsampling scheme. Specifically, let the local critic update $w_i(t)$ in the fast time scale and let actor keep its local policy estimate fixed and only update when the local critic updates $N_t$ steps. Then, according to Lemma~\ref{lem:lipschitz_chi}, when $N_t$ is chosen large enough, the local critic converges. Therefore, the convergence of the local actors can be analyzed in the two-time scale fashion as we have done in this section.

\section{Numerical Simulation}
\label{sec:simulation}

In this section, we illustrate our proposed algorithm and theoretical analysis using a distributed resource allocation example.
Specifically, we consider $6$ resource dispatch centers in an area of interest. These centers make decisions
as to how to allocate available resources amongst each other. For example, the resources can be vehicles or taxis that service passengers in a big city and the dispatch centers can control the number of vehicles in different neighborhoods in the city. Due to varying passenger demand, the dispatch centers may need to transfer vehicles from one another. We define the state of each center $i$ at time $t$ by $s_i(t)$ that captures the number of available resources $m_i(t)$ and the local demands $d_i(t)$. We also define the local action as $a_i(t) = \{a_{ij}(t)\}_{j \in \mathcal{N}_i}$, which denotes the amount of resources sent from center $i$ to its neighbor center $j$ at time $t$. In this simulation, we assume the 6 centers are located in a $2 \times 3$ grid. Each center communicates with its $1-$hop neighboring centers located at its up/down/left/right direction. We assume the demand at each agent is of sinusoidal shape with random noise, i.e.,
\begin{equation*}
	d_i(t) = A_i \sin(\omega_i t + \phi_i) + w_i(t),
\end{equation*}
where $\{A_i, \omega_i, \phi_i\}$ are randomly generated for all the agents. We denote by $T_i =  \frac{2\pi}{\omega_i}$ the periodicity of demand at agent $i$. The noise $w_i(t)$ is subject to a zero-mean Gaussian distribution $\mathcal{N}(0, \sigma_i^2)$, and we set $\sigma_i$ to be $10\%$ of $A_i$. Given this demand model, we let $s_i(t) = [m_i(t), \bar{t}_i(t)]^T$, where $\bar{t}_i(t)$ denotes the phase of the local demand. Moreover, we define the state transition function $T(s, a, w_i)$ as
\begin{equation*}
	m_i(t+1) = \Pi_{\mathcal{M}}[m_i(t) + \sum_{j \in \mathcal{N}_i} a_{ji} - \sum_{j \in \mathcal{N}_i} a_{ij} - d_i(t)]
\end{equation*}
\begin{equation*}
	\bar{t}_i(t+1) = 
	\begin{cases}
	& \bar{t}_i(t) + \delta \text{, if } \bar{t}_i(t) + \delta t < \frac{T_i}{2}, \\
	& \bar{t}_i(t) + \delta t - T_i \text{, if } \bar{t}_i(t) + \delta t > \frac{T_i}{2}.
	\end{cases}
\end{equation*}
where $\mathcal{M}$ is a compact set, $\Pi_{\mathcal{M}}$ is the projection operator onto the set $\mathcal{M}$, and $\delta t$ is the sampling interval. The local reward function is designed as
\begin{equation}
	 r_i(s_i(t)) =  
	\begin{cases}
	& 0   \quad \quad \quad \quad \;\; \;\text{ if } m_i(t) > 0, \\
	& -(-m_i(t))^3  \;\text{ if } m_i(t) < 0.
	\end{cases}
\end{equation}
This reward function penalizes agents for having negative resources but also does not reward them for accumulating too many resources.
Since the agents can only observe $s(t)$ and $a(t)$ but do not know the models for the demands $d_i(t)$, the transition function $T(s, a)$ or the reward functions $r_i(t)$, this problem becomes a distributed RL problem.

We apply Algorithm~\ref{alg:dac} to solve this problem. Specifically, we parmeterize the global policy function $\pi(s)$ using radial Gaussian basis functions (GBF) as $\pi(s) = \sum_{k = 1}^{n_\theta} \phi_k(s) \theta_k$, where
\begin{equation*}
	\phi_k(s) = \frac{1}{\sqrt{2\pi\sigma_k^2}} \exp(-\frac{\|s - c_k\|^2}{2\sigma_k^2}).
\end{equation*}
The feature parameters $\{c_k\}$ are randomly generated and $\{\sigma_k\}$ are adjusted by trial and error. All the agents in the network share the same basis functions for their policies. According to \ref{assum:function_compatibility}, the basis functions for the value functions $Q_i(s,a)$ are $\phi_{w_i}(s, a) = \phi_k(s)(a - \pi_{\theta_i}(s))$. As discussed at the end of Section~\ref{sec:convergence}, we utilize the subsampling to achieve a two-time scale effect. Specifically, we let the local actors update once every $20$ updates of the critics. The step sizes are chosen as $\alpha_w = \alpha_v = \alpha_u = t^{-0.55}$ and $\alpha_\theta = t^{-0.65}$.

\begin{figure}
	\centering
	\includegraphics[width = .9\columnwidth]{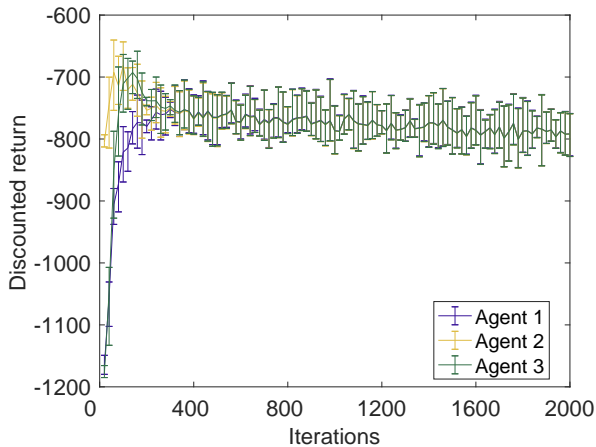}
	\caption{Performance of Algorithm~\ref{alg:dac} on a distributed resource allocation problem. Each curve represents the policy improvement of one agent over iterations. The algorithm is run 5 times. The curves represent the mean performance over the 5 trials and the error bars represent the variance of the performance.}
	\label{fig:curve}
\end{figure}

We run Algorithm~\ref{alg:dac} for $5$ times with the same initialization. Specifically, all agents start with the same initial policy parameters and value function parameters at the beginning of each trial. The randomness in these trials is due to the noise in the demand model and the random exploration policy $\beta(s)$. In Figure~\ref{fig:curve}, we demonstrate that the accumulated return using each agent's policy estimate increases. Each point in the curve in Figure~\ref{fig:curve} is achieved in the following way: fix the policy parameter at agent $i$ at the current iteration, let all the agents execute this policy for $200$ steps and compute the aggregate accumulated reward over the whole network, run the aforementioned process for $20$ times and take the mean of the accumulated rewards. This mean value is used as the performance indicator for the current policy parameter at agent $i$ at the current iteration. Finally, observe in Figure~\ref{fig:curve} that by applying Algorithm~\ref{alg:dac}, the local agent's policies achieve consensus. The policy estimate of agents 1 and 3 improves, while the local of agent 2 degrades. The reason is that the distributed RL problem~\eqref{eq:approx_dec_rl} is nonlinear. Algorithm~\ref{alg:dac} is only guaranteed to converge to a local optimizer.  We observe that the perforamance of the policy estimate at each local agent degrades slightly as the iteration increase, but eventually converges due to the decreasing step size rule. This behavior is caused by the variance in the policy gradient estimate in the Actor Critic method, and can also be observed in other literature on Actor Critic methods, e.g., \cite{zhang18continuous}.
\section{CONCLUSIONS}
\label{sec:conclusion}
In this paper, we proposed a distributed actor critic algorithm to solve multi-agent reinforcement learning problems. Specifically, we assumed that every agent has a local estimate of the global optimal policy function, updates its local estimate with the local value function estimate, and we introduced an additional consenesus step on these local estimates so that the agents asymptotically achieve agreement on the global optimal policy. We anlayzed the convergence of the proposed algorithm and demonstrated its effectiveness on a distributed resource allocation example. Compared to existing distributed actor critic methods for RL, using policy consensus does not require the agents to share their local tasks with each other.

\addtolength{\textheight}{-12cm}   

\bibliographystyle{IEEEtran}
\bibliography{bib_Yan}

\end{document}